\newtheorem{thm}{Theorem}
\newenvironment{proof}{\textbf{Proof:}\ }{\hspace{\stretch{1}}$\square$\\}
\begin{document}
\begin{frontmatter}
\title{Inductive Logic Boosting}

\author{Wang-Zhou Dai}
\author{Zhi-Hua Zhou\corref{cor1}}
\address{National Key Laboratory for Novel Software Technology\\
Nanjing University, Nanjing 210093, China} \cortext[cor1]{\small Corresponding author.
Email: zhouzh@nju.edu.cn}

\begin{abstract} 
Recent years have seen a surge of interest in Probabilistic Logic
Programming (PLP) and Statistical Relational Learning (SRL) models
that combine logic with probabilities. Structure learning of these
systems is an intersection area of Inductive Logic Programming (ILP)
and statistical learning (SL). However, ILP cannot deal with
probabilities, SL cannot model relational hypothesis. The
biggest challenge of integrating these two machine learning frameworks
is how to estimate the probability of a logic clause only from the
observation of grounded logic atoms. Many current methods models a joint
probability by representing clause as graphical model and literals as
vertices in it. This model is still too complicate and only can be
approximate by pseudo-likelihood. We propose Inductive Logic Boosting
framework to transform the relational dataset into a feature-based
dataset, induces logic rules by boosting Problog Rule Trees and
relaxes the independence constraint of pseudo-likelihood. Experimental
evaluation on benchmark datasets demonstrates that the AUC-PR
and AUC-ROC value of ILP learned rules are higher than current
state-of-the-art SRL methods.
\end{abstract} 

\end{frontmatter}

\section{Introduction}
\label{sec:intro}
In recent years, there has been an increasing interest in
integrating first-order logic with probabilities by defining
confidence of a logic formula with a weight. This interest has resulted in
the fields of Probabilistic Logic Programming (PLP)
\cite{DeRaedt:08:PILP} and Statistical Relational Learning (SRL)
\cite{Getoor:07:introSRL}. PLP focuses on the extension of logic
programming languages with probabilities, as in for instance Problog
\cite{DeRaedt:07:problog}. Conversely, SRL extends first-order logic
on probabilistic graphical models as Markov or Bayesian networks, like
Markov logic Network (MLN) \cite{Richardson:06:mln}. 

Structure learning of PLP and SRL is an important but challenging task
because it defines the complex relationships among entities and have
to be learned in a exponential searching space. The task actually is a
intersection of Inductive Logic Programming (ILP) and Statistical
learning (SL). 

Inductive Logic Programming \cite{Muggleton:91:ilp} is
a combination of inductive learning and logic programming,
which employs techniques from both machine learning and logic
programming. Given a set of training examples of a target
predicate(relation) and background knowledge, ILP finds a hypothesis
which is complete and consistent with evidence and background
knowledge\cite{Lavrac:94:ilpbook}.

ILP can learns relational data which is a collection of logic
relations, while SL deals with datasets of grouped instances
that have distinguishing feature values. The difference between these two
tasks makes the combination of ILP and SL very difficult. However,
structure learning of PLP and SRL is so important that has received
much attention recently, many practical algorithms have been proposed.


Most of these methods assume a joint distribution on a probabilistic
graph model for each logic formula, then learn parameters that maximizing
pseudo-likelihood of the formulas by exploiting the labeled
examples. The learning strategy of these approaches can be
categorized into two kinds. The first category learns logic formulas
and weight separately, for instance as
\cite{Kok:05:MSL,mihalkova:07:BUSL,Kok:09:LHL,Kok:10:LSM}. The other
category turns the problem into a series of relational regression
problems and learns the weights and the clauses simultaneously
\cite{Khot:11:MLN-Boost,Natarajan:12:RDN-Boost}.  

Graphical model based methods introduce pseudo-likelihood by assuming
independence between logical literals, then turns the structure
learning problem into a statistical learning problem and solve it with
many statistical algorithms. However, this kind of transformation does
not exploit the ability of induction of first-order logic. It treats
logic literals independently during learning thus loses accuracy,
it also sacrifices the expressiveness of logic because only uses
statistical model to represent the relational hypothesis.

In this paper, we present Inductive Logic Boosting (ILB), possibly a
new class of approaches to integrate ILP with statistical learning. It
first finds paths for positive examples in a hyper-graph that
constructed from relational database. Then substitutes the paths to
first-order core forms (patterns) to generate binary labeled feature-based
instances. Finally learns an ensemble model through Adaboost the Problog
Rule Trees for target predicate on the generated training data
with specific features. The generated dataset and boosting introduces
characters of statistical learning, Problog Rule Tree preserves
features of ILP. ILB relaxes the assumption of independence and learns an
PLP model in global underlying data distribution. Expreiments shows
ILB produces PLPs more accurate and more comprehensible than current
state-of-the-art approaches.

The remainder of this paper is arranged as follows. We begin by
reviewing related works in \ref{sec:relwork}, then introduce some some
backgrounds in \ref{sec:pre}. We describe the detail of ILB in
\ref{sec:method} and report the experiments in \ref{sec:exp}. Finally,
we conclude with future works in \ref{sec:conc}.

\section{Related Work}
\label{sec:relwork}
The recent years of ILP have been dominated by the development of
methods for learning probabilistic logic representations. A general
framework for Probabilistic ILP (PILP) was introduced
\cite{DeRaedt:08:PILP}. 

Most of current systems integrates ILP and statistical learning by
expressing first-order logic as probabilistic graphic models and then
learn the parameters on the graph models. They search structures
(candidate clauses) first, then learns the parameters (weights) and
modify the structures (clauses) accordingly. This kind of approaches
performs either top-down \cite{Kok:05:MSL} or bottom-up searches
\cite{mihalkova:07:BUSL}. There are also works learns PLP by beam
search or approximate search in the space of probabilistic clauses
\cite{Bellodi:13:SLPLP,Mauro:13:BanditPILP}.

There are also some methods combines ILP with SL by boosting. For
example, Boosting FFOIL \cite{Quinlan:96:boostfoil} directly adopts
the boosting framework with a classical ILP system, FFOIL, as weak
learners, it proves that boosting is beneficial for first-order
induction. More recently, RDN-Boost \cite{Natarajan:12:RDN-Boost} and
MLN-Boost \cite{Khot:11:MLN-Boost} turns the problem into relational
regression problems and learns both structures and weights of
graphical model simultaneously.

Different with previous methods, Inductive Logic Boosting transforms
relational dataset to a feature-based dataset, then learns Problog Rule
Trees by discriminative learning to induce both first-order logic
rules and their weights, finally use Adaboost to get an accurate
hypothesis definition of target predicates.

\section{Preliminary}
\label{sec:pre}

In first-order logic, \textit{formulas} are constructed by four types of
symbols: \emph{constants}, \emph{variables}, \emph{functors} and
\emph{predicates}. In this paper we follow the Prolog terminology that
using words begin with a lowercase letter to represent \emph{constants} and
\emph{predicates}, words begin with a uppercase letter to represent
\emph{variables}. A \textit{term} is a variable, a constant, or a
functor applied to terms. An \textit{atom} is of the form
$p(t_1,\dots,t_n)$ where $p$ is a predicate of arity $n$ and the $t_i$
are terms. A \textit{formula} is built out of atoms using using quantifiers
$\forall,\exists$ and usual logical connectives
$\neg,\wedge,\vee,\rightarrow$ and $\leftrightarrow$. A \textit{rule}
(also called a \textit{normal clause}) is a universally quantified
formula of the form $\mathsf{h\texttt{:-}t_1,\dots,t_n}$, where
atom $\mathsf{h}$ is called the \textit{head} of the rule and literals
$\mathsf{t_1,\dots,t_n}$ the \textit{body} where $t_i$ are logical
atoms. The formula means the conjunction $\mathsf{t_1\wedge\dots\wedge
  t_n}$ will deduce $\mathsf{h}$. A \textit{logic program (LP)} is a
set of FOL rules. A \textit{fact} is a \textit{rule} with an empty
\textit{body} and is written more compactly as $\mathsf{h}$, means
$\mathsf{h}$ is always true in the program. 

The task of inductive logic boosting is similar with ILP, which can be 
formally put as this: Given (i) a set of training examples $\mathcal{E}$,
including \textit{true} groundings $\varepsilon^+$ and \textit{false}
groundings $\varepsilon^-$ of a target predicate(relation) $p$;
(ii) a description language $\mathcal{L}$, specifying syntactic
restrictions on the definition of predicate on the definition of
predicate $p$; (iii) background knowledge $\mathcal{B}$, defining
other predicates $q_i$ that may be used in definition of $p$. Find a
hypothesis $\mathcal{H}$ as the definition of $p$, which can predict
the confidence of each grounding of $p$.

Problog is one of Probabilistic Logic Programming languages. It
integrates logic program with probability by adding a probability
$\mathsf{p}$ to each ground facts $\mathsf{f}$, written
$\mathsf{p}::\mathsf{f}$. It also allows \textit{intentional}
probabilistic statements of the form
$\mathsf{p\texttt{::}}p(A_1,A_2,\dots,A_n)\texttt{:-}\mathsf{body}$,
where $p(\cdot)$ is a probabilistic atom as head, $\mathsf{body}$
is a conjunction of calls to non-probabilistic facts. Like Prolog, the
rules are range-restricted: all variables in the head of a rule should
also appear in a positive literal in the body of the rule.

A Problog program specifies a probability distribution over Hebrand
interpretation, or \textit{Possible World}. The ground probabilistic
fact $\mathsf{p\texttt{::}f}$ gives an \textit{atomic choice}, it
means the program can choose to include $\mathsf{f}$ as a fact with
probability $\mathsf{p}$ or reject it with probability
$1-\mathsf{p}$. A \textit{total choice} is obtained by making an
atomic choice for each ground probabilistic fact. The probability
distribution over the total choices is defined to be the product of
the probabilities of the atomic choices that it is composed of as
independent events, i.e. there are two probabilistic facts $0.3::a$
and $0.8::b$, then the total choices are $\{a,b\},\{a\},\{b\}$ and
$\{\}$ with probabilities $0.24,0.06,0.56$ and $0.14$.

\section{Proposed Approach}
\label{sec:method}
We now present the Inductive Logic Boosting framework for integrating
logic induction and statistical learning. ILB represents Problog
rules as decision trees, which we called \emph{Problog Rule Tree}.
It learns rules and weights simultaneously from an alternated
dataset, which is a collection of a binary labeled instances generated
from original relational data. Finally boosts on these rule sets
adaptively to provide an accurate hypothesis.

\subsection{Problog Rule Tree}
\label{sec:PRT}

A \emph{Problog Rule Tree} $T=(h,N,C)$ is a kind of Relational
Probability Tree (RPT)
\cite{Neville:03:RelProbTree}. $h=p(A_1,\dots,A_n)$ is the \emph{head}
of $T$, represents the target predicate $p(A_1,\dots,A_n)$ to be
earned, $\{A_i\}$ are the arguments of $p$; $N={n_1,\dots,n_l}$ is
the node set of $T$. Each node $n$ is either a decision node or end
node. Each decision node $n^d=\{t_1,\dots,t_k\}$ is a conjunction of
logic atoms, it has a \emph{true child} and a \emph{false child} to
determine whether an instance satisfies $n^d$, end node $n^e$ records
the proportion of positive instances which satisfy all its true
ancestors. If $n'$ is the true child of $n$, then $n$ is called
\emph{true parent} of $n'$; $C=\{t_1,\dots,t_m\}$ is the tree root,
together with $h$ they formulate a short logic rule $h\mathtt{:-}C$
which we call \emph{core form} of $T$. $C$ is the first decision node
of a Problog tree and only has a true child. Therefore, one Problog
Rule Tree is learned to expand only one core form. In order to make
the learned rule legal in Problog, we constrain that all variables
$A_i$ appear in $h$ must also appear in $C$. For example, figure
\ref{fig:PRT} denoted a tree to expand \emph{core form}
\emph{sametitle}\texttt{:-}\emph{hasword(X,Z1),hasword(Y,Z1)}.

\begin{figure}[t]
  \centering
  \includegraphics[width=0.5\textwidth]{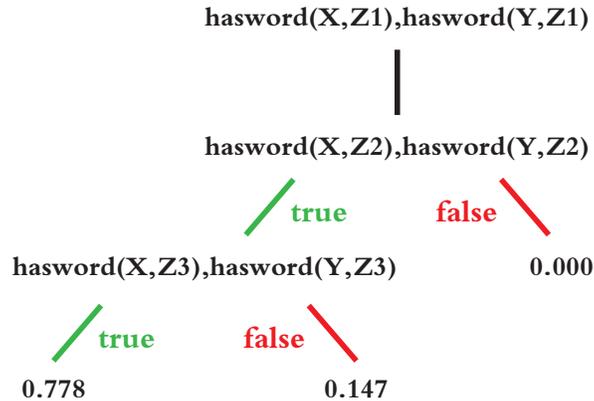}  
  \caption{An Example of learned Problog Rule Tree for predicate
    $h$=\textbf{\emph{sametitle(X,Y)}}: the root conjunction is the
    \emph{core form} $c$ that to be expanded, which means $X$ and $Y$
    have a common word $Z_1$. Other clauses are decision nodes. The
    end nodes 0.000, 0.147, 0.778 correspond to the proportion of true
    \emph{sametitle(X,Y)} that have only one, two or three common
    words, the proportions also are the probability of corresponding
    Problog rules.}
  \label{fig:PRT}
\end{figure}


From each end node in $T$, we can restore a Problog statement by
backtracking its route to the root. For example in figure
\ref{fig:PRT}, we can get a statement from the 0.147 end node:

\begin{tabular*}{0.4\textwidth}{l l l}
\multicolumn{2}{l}{0.147\texttt{::}\emph{sametitle(X,Y)} \texttt{:-}} \\
 & \hspace{0.15\textwidth}\emph{(hasword(X,Z1), hasword(Y,Z1))},\\
 & \hspace{0.15\textwidth}\emph{(hasword(X,Z2), hasword(Y,Z2))},\\
 & \hspace{0.15\textwidth}\emph{\textbackslash+((hasword(X,Z3),
   hasword(X,Z3)))},\\ 
 & \hspace{0.15\textwidth}\emph{unique([Z1,Z2,Z3])}.
\end{tabular*}

which means if $X$ and $Y$ only have 2 common words, the
probability of \emph{sametitle(X,Y)} is $0.147$ (This rule is only for
demonstration, the actual problog rule used in ILB will ignore negations of
conjunctions). 

The weighted proportion of positive instances in an end node is used
as confidence of the statement. This is because for an individual
tree, Prolog rule extracted from each end node covers different part
of data (which is the nature of decision tree). Without
overlap in the underlying distribution, the proportion of
positive instances is the maximum likelihood estimation for the 
confidence of those rules.

Remind that each tree only deals with the formulas learned from one
core form, although the expanded Problog rules from same core form has
no intersection in their coverage, the trees who expand different core
forms may cover same examples. For instance in entity resolution task,
an example \emph{sameauthor(person1,person2)} may be covered by rule
``$0.2$\texttt{::}\emph{sameauthor(X,Y)}\texttt{:-} \emph{hasword(X,Word),
  hasword(Y,Word)}.'' and rule ``$0.3$\texttt{::}\emph{sameauthor(X,Y)}
\texttt{:-} \emph{author(X,Title1), sametitle(Title1,Title2),
  author(Y,Title2)}.'' at the same time. We followed the Problog
settings to use \textbf{noisy or} operation to estimate the joint
probability:

\begin{equation}
  \label{eq:noisyor}
  P(X|R_1,\dots,R_k,\neg R_{k+1},\dots,\neg R_n)=1-\prod_{i=1}^k(1-p_i)
\end{equation}
where $p_i$ is the probability of rule $R_i$ to be true. Noisy or is a
probabilistic generalization of the logical or. In the previous example
$P(sameauthor(person1,person2)=true)=1-(1-0.2)(1-0.3)=0.44$.



\subsection{Structure and Parameter Learning}
\label{sec:learn}

ILB can Learn Problog rule trees through many simple decision tree
learners, for instance as C4.5 \cite{Quinlan:93:C45} or CART
\cite{Breiman:84:CART}. In order to make these learners feasible, ILB
will turn relational data into a feature-based discriminative dataset
at first.

When learning a target predicate $p$, we define an instance $x$ of ILB
is a pair $(\mathtt{P},y)$ consists of label $y\in\{-1,1\}$ and a
conjunction of some grounded logical atoms
$\mathtt{P}=t_1\wedge\dots\wedge t_n$. Notice the set of
\textbf{generated instances} $E$ is different from the set of
pre-labeled training examples $\mathcal{E}$ defined in section
\ref{sec:pre}. The outline of instance generation procedure is
presented in algorithm \ref{alg:instgen}.


The procedure of feature-based data extraction is based on a hypergraph
generated from the original data. Relational database can be viewed as
a hypergraph with constants as nodes, and true ground atoms as
hyperedges. Therefore, first-order rule which defines a goal concept
can be viewed as a template subgraph consists of numerous variabilized
hyperedges. 

Logical induction is actually to find such a template subgraph that
could match (or cover) as much as positive examples and as few as the
negative examples. However, the hypothesis space of subgraph structure
is so huge and complex that makes ordinary searching algorithms
intractable. 




ILB constrains the searching space by relational path finding
\cite{Richards:92:pathfind}. It is based on the
assumption that there usually exists a fixed-length path of relations
linking the set of terms that satisfying the goal concept. This
approach achieves many good results in inductive logic programming
area and inspired lots of SRL structure learning algorithms
\cite{mihalkova:07:BUSL,Kok:09:LHL}. ILB uses paths of positive
examples as the \emph{core forms} from which it expands the searching
space to find the optimal solution.



\begin{figure}[t]
  \centering
  \includegraphics[width=0.6\textwidth]{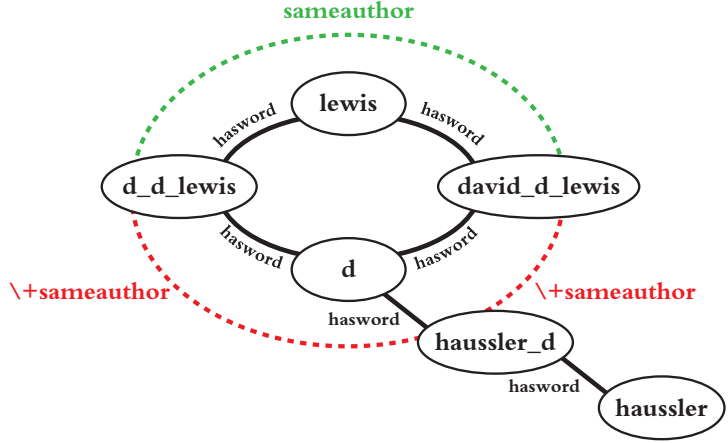}  
  \caption{Example in Cora dataset of learning \emph{sameauthor(X,Y)}:
      There are 3 people in this domain, their name have a common word
      ``d'', but \emph{haussler\_d} is different with the
      other nodes. ``\textbackslash+'' means negation in Prolog clause.}
  \label{fig:instgen}
\end{figure}

At the beginning of learning Problog Rule Trees, we construct a
hypergraph $\mathcal{G}=(E(\mathcal{G}),V(\mathcal{G}))$ from the
relational database $D$, where $E(\mathcal{G})$ is the set of
hyperedges $\{t_i(v_1,\dots,v_k), k=arity(t_i)\}$ (denoted by $t$
because they are also \emph{ground terms} in $D$), $V(\mathcal{G})$ is
the set of all vertices in $\mathcal{G}$. This step is described as
initialization step in algorithm \ref{alg:instgen}.

Second, ILB uses a depth-first $FindPath$ procedure to search paths
that contains the nodes appear in a positive examples of
$e^+=\varepsilon^+$. In the example showed in figure \ref{fig:instgen},
given a positive example
\emph{sameauthor(d\_d\_lewis,david\_d\_lewis)}, $PathFinding$ will
finds 2 grounded paths:
$\mathtt{P}_1$=\emph{\{hasword(d\_d\_lewis,lewis),
  hasword(david\_d\_lewis, lewis)\}} and
$\mathtt{P}_2$=\emph{\{hasword(d\_d\_lewis, d),
  hasword(david\_d\_lewis, d)\}}.

Third, variabilize the retrieved paths to get a
\textbf{core form} $h\mathtt{:-} C$, formally put as
$\theta e$ \texttt{:-} $\theta\mathtt{P}$, where $\theta=[a_i/A_i]$ is
a substitution replaces each unique constant appear in $\mathtt{P}$ with a
variable. In the above example we'll get only one core form
\emph{sameauthor(X,Y)} \texttt{:-} \emph{\{hasword(X,Z),
  hasword(Y,Z)\}}. This step is described as $Substitute$
procedure in algorithm \ref{alg:instgen}.

Fourth, to capture the underlying global distribution in domain, ILB
uses the \emph{core forms} to match the whole hypergraph to find all
grounded paths that satisfy it, this step can be efficiently done by
querying the first-order clause of the core form in the relational
database, shown as $QueryProlog$ procedure in algorithm
\ref{alg:instgen}. In the example, we will get four grounded paths:
the former two paths $\mathtt{P}_1$, $\mathtt{P}_2$ and two new paths
$\mathtt{P}_3$ = \emph{\{hasword(haussler\_d,d),
  hasword(d\_d\_lewis,d)\}} and $\mathtt{P}_4$ =
\emph{\{hasword(haussler\_d,d), hasword(david\_d\_lewis,d)\}} (In this
example there will be another four symmetric paths, here we omit them
as trivial situations). 

Finally, we construct an instance $x=(\mathtt{P},y)$ for training data
as follows: for each retrieved path $\mathtt{P}$ check its deduced
ground atom deduced from the core form, which is described
as $PrologDeduce$ in algorithm \ref{alg:instgen}. If the grounding
belongs to $\varepsilon^+$ then assign $y=1$, if it belongs to
$\varepsilon^-$ or does \emph{not appears} in $\varepsilon$ then
$y=-1$ (This follows the closed world assumption). Hence the algorithm
can generate 4 instances in our example, $\mathtt{P}_1$ and
$\mathtt{P}_2$ that deduce
\emph{sameauthor(d\_d\_lewis,david\_d\_lewis)} will be marked as
positive instances; $\mathtt{P}_3$ and $\mathtt{P}_4$ deduce
\emph{sameauthor(haussler\_d,david\_d\_lewis)} and
\emph{sameauthor(haussler\_d,d\_d\_lewis)}, which are not in
$\varepsilon^+$, so they will be marked as negative instances.

\begin{algorithm}[h!]
   \caption{\emph{GenerateInstance($\mathcal{B},\varepsilon^+$)}}
   \label{alg:instgen}
\begin{algorithmic}
   \STATE {\bfseries Input:} Background knowledge (Relational
   Database) $\mathcal{B}$, positive examples $\varepsilon^+$
   \STATE {\bfseries Output:} Training data $D=\{(\mathtt{P}_i,y_i)\}$
   \STATE Initialize hypergraph $\mathcal{G}$ with $\mathcal{B}$.
   \STATE $D=\phi$.
   \STATE $coreForms=\phi$
   \FOR {each example $e\in\varepsilon^+$}
   \STATE $paths=FindPath(G,e)$
   \FOR {each path $\mathtt{P}\in paths$}
   \STATE core form $c=Substitute(\mathtt{P})$
   \IF {$C\notin coreForms$}
   \STATE add $C$ to $coreForms$
   \ENDIF
   \ENDFOR
   \ENDFOR
   \FOR {each core form $C\in coreForms$}
   \STATE $allPaths=QueryProlog(\mathcal{B},c)$
   \FOR {each queried output $\mathtt{P}_i\in allPaths$}
   \STATE $head=PrologDeduce(P_i,c)$
   \IF {$head\in \varepsilon^+$}
   \STATE $y_i=1$
   \ELSE 
   \STATE $y_i=-1$
   \ENDIF
   \STATE
   $(\Phi_b(\mathtt{P}_i),\Phi_p(\mathtt{P}_i))=computeFeature(\mathtt{P}_i)$ 
   \STATE
   $x_i=generateInstance(c,\mathtt{P}_i,y_i,\Phi_b(\mathtt{P}_i),\Phi_p(\mathtt{P}_i))$
   \STATE add $x_i$ to $D$
   \ENDFOR
   \ENDFOR
   \STATE {\bfseries Return:} $D$
\end{algorithmic}
\end{algorithm}

After we get the labeled instances, a more important task for rule
induction by statistical learning is to calculate features. ILB uses
two kinds of feature for each generated instance $x=\{\mathtt{P},y\}$:

\begin{itemize}
\item \textbf{Branch Feature}: A \emph{branch feature}
  $\Phi_b(\mathtt{P},v_i)$ of node $v_i$ in path $\mathtt{P}$ is a
  substituted path $\theta\mathtt{P}'$, in which $\mathtt{P}'$ starts
  from $v_i$ and only share one node with $\mathtt{P}$. Thus it looks
  like a tree branch spread out from $\mathtt{P}$. $\theta$ is the
  same substitution that maps $\mathtt{P}$ to its core form
  $C(\mathtt{P})$. In the example of figure \ref{fig:instgen},
  instance $\mathtt{P}_4$ has a branch feature
  \emph{hasword(X,haussler)}.

\item \textbf{Path Feature}: some vertices in path $\mathtt{P}$ might be also
  connected by paths other than $\mathtt{P}$. These paths have more than one
  shared vertices with $\mathtt{P}$, thus each one of them can form
  loops with some edges in $\mathtt{P}$. We substitute them with
  $\theta$ and variabilize all other unique constants to construct
  \emph{path features}. Without loss of generality, we
  define \emph{path feature} $\mathtt{P}'=\Phi_p(\mathtt{P})$ are the
  paths that their start node and end node are the only $2$ shared nodes with
  $\mathtt{P}$. If $\mathtt{P}'$ have $n>2$ shared nodes with
  $\mathtt{P}$, then it can be split into $n-1$ parts that each part
  only shares $2$ nodes with $\mathtt{P}$. In the previous example, instance
  $\mathtt{P}_1$ has a path feature
  \emph{\{hasword(X,Z1),hasword(Y,Z1)\}} where \emph{Z1=d} is
  different from \emph{Z=lewis} formulates another path from $X$ to
  $Y$.
\end{itemize}

The \emph{branch feature} $\Phi_b(\mathtt{P},v_i)$ represents
individual property of each node $v_i$ in path $\mathtt{P}$, the
\emph{path feature} $\Phi_p(\mathtt{P})$ captures auxiliary relations
of the nodes within $\mathtt{P}$.

With these two kinds of features, Inductive Logic Boosting
can expand the core form to estimate the goal concept by greedily
adding the best feature step by step. 

\begin{thm}
\emph{(Completeness of Features)}
\label{th:featcomp}
If the optimal body of the goal predicate $p(A_1,\dots,A_n)$ can
be represented by a variabilized hypergraph $\mathcal{G}_p$, then
starting a expansion from any core form $C$ that contains all
arguments $A_i$ of $p$, we can find all other hyper edges
belongs to the optimal $\mathcal{G}_p$ only by exploring and
variabilizing those paths in $\Phi_b(\mathtt{P})$ and
$\Phi_p(\mathtt{P})$, where $\mathtt{P}$ is a grounded path of any
positive instances $x=(\mathtt{P},1)$ of predicate $p$. 
\end{thm}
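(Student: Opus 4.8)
The plan is to argue by structural induction on the hyperedges of the optimal variabilized hypergraph $\mathcal{G}_p$, viewing $\mathcal{G}_p$ as a connected subgraph that contains the core form $C$ as a sub-subgraph. First I would set up the correspondence: any grounded positive instance $x=(\mathtt{P},1)$ gives a concrete embedding of $C$ into the data hypergraph $\mathcal{G}$, where the terms $A_1,\dots,A_n$ of $p$ are mapped to the constants occurring in the deduced ground atom. Since $C$ contains all arguments $A_i$, this embedding pins down the ``anchor'' vertices of any larger template. The goal is then to show that every hyperedge $e$ of $\mathcal{G}_p\setminus C$ is reachable, after variabilization, either as part of a branch feature $\Phi_b(\mathtt{P})$ or as part of a path feature $\Phi_p(\mathtt{P})$.

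The key case split is on how many vertices a hyperedge (or a maximal chain of hyperedges) of $\mathcal{G}_p$ shares with the already-covered part of the template. First I would handle the base case: a hyperedge $e\in\mathcal{G}_p$ that shares at least one vertex with $C$. If $e$ shares exactly one vertex with the current path, then by definition the grounded realization of $e$ (extended maximally until it re-touches $\mathtt{P}$ or dead-ends) is a branch feature; if it shares two vertices it is a path feature, and if it shares $n>2$ vertices the paper's own remark lets us split it into $n-1$ segments each sharing exactly two vertices, so it is a union of path features. For the inductive step, suppose all hyperedges within distance $k$ of $C$ in $\mathcal{G}_p$ have been recovered and attached to the growing template $\mathtt{P}^{(k)}$; a hyperedge $e$ at distance $k+1$ shares a vertex with $\mathtt{P}^{(k)}$, so the same one-vertex / two-vertex / many-vertex classification applies with $\mathtt{P}^{(k)}$ in place of $\mathtt{P}$, and since $\mathtt{P}^{(k)}$ was itself assembled from branch and path features of $\mathtt{P}$, the new edge is reachable by one more round of feature extraction. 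Connectedness of $\mathcal{G}_p$ (which I would justify from range-restrictedness: every variable in the head, hence in $C$, must be linked through the body) guarantees that every hyperedge is eventually reached, so the induction exhausts $\mathcal{G}_p$. Throughout, the substitution $\theta$ that maps $\mathtt{P}$ to $C$ is exactly the one used in the feature definitions, so the variabilized edges we recover carry the correct variable names and glue consistently onto $C$.

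The main obstacle I anticipate is making the notion of ``exploring paths'' precise enough that the branch/path feature dichotomy is genuinely exhaustive — in particular, ruling out a hyperedge of $\mathcal{G}_p$ that touches the template in a ``degenerate'' way, e.g. a self-loop, a hyperedge all of whose vertices are internal to a single hyperedge of $\mathtt{P}$, or an edge that attaches only to a vertex introduced by an earlier branch rather than to $\mathtt{P}$ itself. I would address these by (i) noting that a hyperedge touching the template in $\ge 3$ vertices is covered by the paper's splitting remark, (ii) treating a branch feature's terminal vertex as a legitimate new anchor so that the induction's ``current template'' genuinely grows, and (iii) invoking the range-restriction/connectedness hypothesis to exclude components of $\mathcal{G}_p$ disconnected from the arguments $A_i$, since such components could not appear in a legal Problog rule whose head is $p(A_1,\dots,A_n)$. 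A secondary, more bookkeeping-level difficulty is confirming that the particular positive instance $x=(\mathtt{P},1)$ chosen does instantiate $C$ faithfully; here I would simply invoke the construction of core forms in Section~\ref{sec:learn}, where $C$ is obtained by variabilizing a path of a positive example, so at least one such $\mathtt{P}$ exists by construction.
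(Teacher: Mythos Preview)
Your proposal is correct, but it takes a noticeably different and more elaborate route than the paper's own proof.

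The paper's argument is essentially a \emph{one-shot} reachability claim rather than an induction. After grounding $\mathcal{G}_p$ to a graph $g_p$ containing $\mathtt{P}$, the paper simply invokes connectedness of $g_p$ to say that every hyperedge $t'\in g_p\setminus\mathtt{P}$ lies on some path $\mathtt{P}'$ back to a vertex of $\mathtt{P}$; since a branch feature $\Phi_b(\mathtt{P},v_i)$ is by definition an entire path emanating from $v_i\in\mathtt{P}$ (not just a single adjacent edge), this path $\mathtt{P}'$ \emph{is} a branch feature, and therefore $t'$ is already covered by $\Phi_b$ alone. Path features enter only as a refinement when several such branch paths through $t'$ exist and can be glued. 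The paper even remarks, immediately after the proof, that $\Phi_b$ by itself is complete.

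Your structural induction on distance from $C$, growing an intermediate template $\mathtt{P}^{(k)}$ and re-classifying adjacent edges at each stage, is sound and mirrors how the greedy learner actually extends a core form. What it buys you is a cleaner treatment of the edge cases you flag (self-loops, edges attaching to newly introduced vertices), and it makes explicit why range-restrictedness forces connectedness. What the paper's approach buys is brevity: because branch features are full paths rather than single-edge extensions, no induction is needed and the iterative template $\mathtt{P}^{(k)}$ never has to be introduced. Your worry about edges that attach ``only to a vertex introduced by an earlier branch rather than to $\mathtt{P}$ itself'' disappears in the paper's framing, since any such edge still lies on some (longer) branch path rooted at $\mathtt{P}$.
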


\begin{proof}
Notice that a positive instance $\mathtt{P}$ is generated from a positive
example $p(a_1,\dots,a_n)$ which unifies with the goal predicate
$p(A_1,\dots,A_n)$, there exists a core form $C\in\mathcal{G}_p$ whose
body can be grounded to $\mathtt{P}$. Moreover, since $\mathtt{P}$ is
positive, there also exists a grounded graph $g_p$ that
contains $\mathtt{P}$ and unifies with the optimal first-order graph
$\mathcal{G}_p$. Because the number of vertices in $\mathcal{G}_p$ is
finite, the optimal substitution $\theta^*$ from node to variables
can be easily found. Therefore, the goal of this proof has been
reduced to prove the grounded graph $g_p$ is reachable when expanding
$x$ with only $\Phi_b(\mathtt{P})$ and $\Phi_p(\mathtt{P})$.

Because $\mathcal{G}_p$ is connected, so after substitution $\theta^*$
the grounded graph $g_p$ remains connected. Hence for each
hyper edge $t'\in g_p \wedge t'\notin \mathtt{P}$, there must exists
at least one path $\mathtt{P}'$ that starts from $t'$ and ends with a
node $v\in \mathtt{P}$. So for all hyper edges $t'\in g_p$,
there exists $\mathtt{P}'\in\Phi_b(\mathtt{P})$ that contains
$t'$. For the hyper edges $t'$ who can connect $\mathtt{P}$ through
more than one paths in $\Phi_c(x)$, e.g. $\mathtt{P}'_1$ and
$\mathtt{P}'_2$, we can construct a \emph{path feature} by connecting
$\mathtt{P}'_1$, $\mathtt{P}'_2$.
\end{proof}

We can see that with only $\Phi_b$ is already complete for
searching the optimal rule of goal concept. However, the number of $\Phi_b$
increases exponentially with the length of path, which means the
information conveyed by \emph{branch feature} decreases exponentially by
its length. Conversely, \emph{path features} is more informative. They
provide auxiliary relation information. Like the example in figure
\ref{fig:instgen}, to learn a predicate \emph{sameauthor(X,
  Y)} start from core form \emph{\{hasword(X, Z), hasword(Y, Z)\}},
apparently a path feature \emph{\{hasword(X, Z1), hasword(Y,Z1)\}} as
more important that branch features like \emph{\{hasword(X, Z2)\}}. 

The $computeFeature$ procedure in algorithm \ref{alg:instgen}
represents this step. Finally with all paths and features we computed,
the training instances are generated and added to training data.
For the example in figure \ref{fig:instgen}, a part of finally
generated data for learning predicate \emph{sameauthor(X,Y)} is
displayed in table \ref{tab:instgen}.

\begin{table}[h]
  \centering
\begin{tabular}{ | c | c | l | }
\cline{1-3}
Core Form & \multicolumn{2}{|c|}{\emph{sameauthor(X, Y)}\texttt{:-}\emph{hasword(X, Z), hasword(Y, Z).}} \\ 
\cline{1-3}
\multirow{6}{*}{$x_1$} & $y$ & 1\\ \cline{2-3}
 & $\mathtt{P}$ & \emph{\{hasword(d\_d\_lewis,lewis),
   hasword(david\_d\_lewis,lewis)\}}.\\ \cline{2-3}
 & \multirow{2}{*}{$\Phi_b$} & \emph{\{hasword(X,d)\}}.\\
 & & \emph{\{hasword(Y,d)\}}.\\ \cline{2-3}
 & $\Phi_p$ & \emph{\{hasword(X,Z1),hasword(Y,Z1)\}}.\\
\cline{1-3}
\multirow{6}{*}{$x_2$} & $y$ & -1\\ \cline{2-3}
 & $\mathtt{P}$ & \emph{\{hasword(haussler\_d,d), hasword(david\_d\_lewis,d)\}}.\\ \cline{2-3}
 & \multirow{3}{*}{$\Phi_b$} & \emph{\{hasword(X,haussler)\}}.\\
 & & \emph{\{hasword(Y,lewis)\}}.\\
 & & \emph{\{hasword(Y,david)\}}.\\ \cline{2-3}
 & $\Phi_p$ & -- \\
\cline{1-3}
\end{tabular}
  \caption{Example of generated instance in Cora dataset}
  \label{tab:instgen}
\end{table}

From the generated labeled data, we can learn a Problog Rule Tree for
each \emph{core form} by any decision tree learner. 


\subsection{Boosting}
\label{sec:adaboost}

Inductive Logic Boosting learns a discriminative task rather than
doing logic induction by calculating coverage. This feature makes
ILB more fits to boosting framework.


ILB use the confidence-based Adaboost \cite{Schapire:99:ImpBoostConf}
in boosting stage. However, the weak hypothesis $h_t$ of each round
boosting is a set of Problog rules, which are expanded from different
core forms. To combine them, ILB use the \emph{noisy or} feature
we have mentioned before. During each evaluation of current hypothesis
$h_t$, for all instances $x\in Instance_e=\{x|PrologDeduce(x,C)=e\}$
that can deduce same example $e\in\varepsilon$ with its core form, ILB
assigns the probability $P(x)=NoisyOr(h_t(x_1),\dots,h_t(x_m)),\forall x_i\in
Instance_e$. Notice that there might be some instances
$x_k=(\mathtt{P}_k,y_k)$ that $h_t$ not covered, follows the
\emph{closed world assumption}, we define $h_t(k)=0$. 

\section{Experiments}
\label{sec:exp}

\subsection{Datasets}
\label{sec:dataset}
We carried out experiments on two real world datasets to investigate
whether ILB performs better than previous state-of-the-arts
SRL approaches. The task is to learn target predicate definitions with
evidence predicates. Both datasets are publicly available at
http://alchemy.cs.washington.edu. 

\begin{table}[t]
  \label{tab:data}
  \centering
  \begin{tabular}{ | c | c | c | c | c | c | }
  \hline
  Dataset & Types & Constants & Predicates & True Atoms & Total Atoms \\
  \hline
  Cora & 5 & 3,079 & 10 & 42,558 & 687,422 \\
  \hline
  UW-CSE & 9 & 929 & 12 & 2112 & 260,254 \\
  \hline
  \end{tabular}
  \caption{Detail of datasets}
\end{table}

\textbf{Cora}. This dataset is a collection of citations to computer
science papers, created by Andrew McCallum, and later processed by
Singla and Domingos \cite{Poon:07:MLNCora} into 5 folds for the task
of duplicating the citations. Evidence predicates are other relations like
\emph{author(Bib,Author)},\emph{title(Bib,Title)},
\emph{venue(Bib,Venue)}, and so on. Relations in this domain is simple
and clear.

\textbf{UW-CSE}. This dataset was prepared by Richardson and Domingos
\cite{Richardson:06:mln}, describes relationships in an academic
department. The dataset is divided into 5 independent areas/folds (AI,
graphics, etc.). The evidence predicates describe students, faculty,
and their relationships (e.g, \emph{Professor(person)},
\emph{TaughtBy(course, person, quarter)}, etc.).  Target predicate is
\emph{advisedBy(Person, Person)}. We omitted 9 equality predicates
follows \cite{Kok:10:LSM}. Relational structure of this domain is more
complex since predicate and arguments are more complicate than Cora
domain.

The detail of each dataset is showed in \ref{tab:data}. Cora has more
constants but has a simpler and clearer relation structure, UW-CSE is a
more complex relational model (hyper graph).

\subsection{Compared Methods}
\label{sec:compare}

We compared Inductive Logic Boosting to following
state-of-the-art systems:

\textbf{RDN-Boost} \cite{Natarajan:12:RDN-Boost}. This algorithm represents a
Relational Dependency Network (RDN) model as regression trees and learns by
boosting. It turns the structure learning problem into a
series of relational function-approximation problems and solves by
gradient-boosting, which easily induces highly complex features over
several iterations and in turn estimate quickly a very expressive
model. This work outperforms numbers of state-of-the-art SRL
structure learning algorithms. Base on this system there is also a
modified version for learning Markov Logic Network
\cite{Khot:11:MLN-Boost}. This approach is denoted as RDN-B.

\textbf{Learning MLN structure by Structure Motif} \cite{Kok:10:LSM}. Key
insight of this approach is that relational data usually contains
recurring patterns, which is called structural motifs. By constraining the
search for clauses to occur within motifs, it can greatly speed up the
search and thereby reduce the cost of finding long clauses(i.e.,
formulas with more than 4 or 5 literals). We use LSM to denote it.

In order to make the comparison as fair as possible, we used the
following protocol. For RDN Boost, we use the default parameter
setting that constrain maximum tree hierarchy to be 4, each node
contains 2 literals at most and boosting for 20 turns. For LSM, we
employ the parameter suggested in \cite{Kok:10:LSM}. ILB searches uses same
hierarchy and node literal length (in ILB we constrain the path length in
feature $\Phi_b$ and $\Phi_p$) limit settings as RDN Boost since they
both learn clauses based on boosting weak learners in tree
structure. Besides, we constrained the max \emph{core form} length for
Cora and UW-CSE to be 4 and 2 accordingly.

Notice that both RDN and LSM have to input background knowledge of
constant types and predicate forms (e.g. \emph{author(Bib,Author)}
indicates predicate \emph{author} only can take \emph{Bib} and
\emph{Author} as arguments in exact those position). Further more, for
RDN approaches we enumerated all possible predicate ``modes''
(e.g. \emph{samebib(`Bib,+Bib)} indicates the first \emph{Bib} can be
not in the head of learned clause) to ensure the completeness while
learning clauses. ILB does not need to use predefined predicate
formulation or variable types. However, the instance generation
procedure always produces huge number of instances (especially
negative instances). Therefore, we randomly sample a part of them
during instance generation. In Cora task, we randomly generate 1000
instances for each \emph{core form} in 1 fold, and 300 in UW task. 

\subsection{Results}
\label{sec:result}

\begin{table*}[t]
\centering
\begin{tabular}{ | l | c | c | c | c |}
\hline
& \multicolumn{2}{|c|}{SameAuthor} & \multicolumn{2}{|c|}{SameBib} \\
\hline
System & AUC-PR & AUC-ROC & AUC-PR & AUC-ROC \\
\hline
ILB & $\mathbf{0.9517\pm0.05}$ & $\mathbf{0.9835\pm0.02}$ &
$\mathbf{0.9576\pm0.04}$ & $\mathbf{0.9967\pm0.00}$ \\
\hline
RDN-B & $0.8094\pm0.14$ & $0.8877\pm0.13$ & $0.9046\pm0.03$ &
$0.9475\pm0.02$ \\
\hline
LSM & -- & -- & -- &
-- \\
\hline
& \multicolumn{2}{|c|}{SameTitle} & \multicolumn{2}{|c|}{SameVenue} \\
\hline
System & AUC-PR & AUC-ROC & AUC-PR & AUC-ROC \\
\hline
ILB & $\mathbf{0.7668\pm0.09}$ & $\mathbf{0.9784\pm0.02}$ &
$\mathbf{0.6696\pm0.11}$ & $\mathbf{0.9606\pm0.01}$ \\
\hline
RDN-B & $0.1424\pm0.05$ & $0.7790\pm0.06$ & $0.0855\pm0.03$ &
$0.5698\pm0.03$ \\
\hline
LSM & -- & -- & -- &
-- \\
\hline
\end{tabular}
\caption{Results on Cora dataset}
\label{tab:cora}
\end{table*}

\begin{table}[t]
\centering
\begin{tabular}{ | l | c | c |}
\hline
& \multicolumn{2}{|c|}{advisedBy} \\
\hline
System & AUC-PR & AUC-ROC \\
\hline
ILB & $0.6192\pm0.16$ & $0.8932\pm0.08$ \\
\hline
RDN-B & $0.6140\pm0.21$ & $0.9549\pm0.03$ \\
\hline
LSM & $0.22\pm0.02$ & $0.62\pm0.06$ \\
\hline
\end{tabular}
\caption{Results on UW-CSE dataset}
\label{tab:uw}
\end{table}

We evaluate these approaches not only by the labeled positive and
negative examples. Actually, there always be many falsities
for a target concept that not covered by the labeled
examples. Thus we evaluate the predicate concept induction task in global
distribution by treating all relations that not appears in
labeled data as negative examples.

Consider the learned definition of target predicate as a binary
classifier, we choose AUC-ROC to compare these approaches.
However, a key property of most relational data sets is the number of
negatives can be order of magnitude more than the number of
positives. To ignore the impact of the overwhelming true negatives, we
also use area under precision-recall curves (AUC-PR) to evaluate the
performance.

The evaluation result of Cora dataset is showed in table
\ref{tab:cora}. In this task LSM only produces trivial
unit clauses, so ILB is only compared with RDN-Boost. We can see that
both AUC-PR and AUC-ROC value of ILB are significantly better than
RDN-Boost. A major reason is that the RDN-B learns a graphical model
by maximizing pseudo-likelihood which assumes independence between
all random variables (logical literal) in logic formulas, while ILB
directly uses the empirical probability to estimate possibility of a
rule being satisfied, which results in better estimation of the goal
concepts.

Result on UW-CSE is presented in table \ref{tab:uw}. In
facts, due to the high complexity in hypergraph generated in by
UW-CSE, path finding in the relational graph will get so many paths,
which results in more than 100 core forms and millions of
feature-based instances and path features, which hardly can be handled
by ILB. Thus, we only samples 5\% of instances and 10\% of features to
do Problog Tree induction. With a highly incomplete training data, the
result is still comparable with RDN-Boost. LSM performs worst because
we did not run another round weight learning procedure as \cite{Kok:10:LSM}
suggests. But even if we do, the performance of LSM should also be
worse than RDN-Boost \cite{Khot:11:MLN-Boost}.

We can also observe that the results in table \ref{tab:cora} for
RDN-Boost and LSM are worse than those reported by
\cite{Natarajan:12:RDN-Boost}, \cite{Khot:11:MLN-Boost} and
\cite{Kok:10:LSM}. They learn a predicate with all
other predicates as evidences. For instance, when learning
\emph{SameBib} in Cora, \emph{SameAuthor} and \emph{SameTitle} can be
used in body of learned hypothesis. Our task on predicate concept
induction is much more challenging since we do not use any predicates
that can be superseded by other predicates in the domain. This ability
ensures us to discover novel knowledge from the most basic concepts in
a domain.

\section{Conclusions}
\label{sec:conc}
We presented Inductive Logic Boosting, which learns
weighted logical rules in a statistical learning framework.
It uses path-finding in relational domain to generate a discriminative
labeled dataset and calculates two kinds of features. Then performs
decision tree boosting, a simple yet effective statistical learning
algorithm, to learn Problog rules. Our empirical comparisons with two
state-of-the-art systems on real datasets demonstrate the
effectiveness of ILB.

As future work, we want to generalize the ILB framework to accomplish
more logic induction tasks like predicate invention and learn
recursive rules, to provide a different angle of view for inductive logic
programming.
 
\bibliographystyle{alpha}
\bibliography{ml,ilp}

\newcommand{\etalchar}[1]{$^{#1}$}
\begin{thebibliography}{NKK{\etalchar{+}}12}

\bibitem[BFOS84]{Breiman:84:CART}
Leo Breiman, J.~H. Friedman, R.~A. Olshen, and C.~J. Stone.
\newblock {\em Classification and Regression Trees}.
\newblock Wadsworth, 1984.

\bibitem[BR13]{Bellodi:13:SLPLP}
Elena Bellodi and Fabrizio Riguzzi.
\newblock Structure learning of probabilistic logic programs by searching the
  clause space.
\newblock {\em Computing Research Repository}, abs/1309.2080, 2013.

\bibitem[DBR13]{Mauro:13:BanditPILP}
Nicola {Di Mauro}, Elena Bellodi, and Fabrizio Riguzzi.
\newblock Bandit-based monte-carlo structure learning of probabilistic logic
  programs.
\newblock In {\em 23rd International Conference on Inductive Logic
  Programming}, pages 1093--1097, Washington, DC, 2013. IEEE.

\bibitem[DRK08]{DeRaedt:08:PILP}
Luc De~Raedt and Kristian Kersting.
\newblock {\em Probabilistic Inductive Logic Programming}.
\newblock Springer, 2008.

\bibitem[DRKT07]{DeRaedt:07:problog}
Luc De~Raedt, Angelika Kimmig, and Hannu Toivonen.
\newblock Problog: A probabilistic prolog and its application in link
  discovery.
\newblock In {\em Proceedings of the 20th International Joint Conference on
  Artificial Intelligence}, pages 2462--2467, Hyderabad, India, 2007.

\bibitem[GT07]{Getoor:07:introSRL}
Lise Getoor and Ben Taskar.
\newblock {\em Introduction to statistical relational learning}.
\newblock The MIT press, 2007.

\bibitem[KD05]{Kok:05:MSL}
Stanley Kok and Pedro Domingos.
\newblock Learning the structure of markov logic networks.
\newblock In {\em Proceedings of the 22nd International Conference on Machine
  Learning}, pages 441--448, Bonn, Germany, 2005. ACM.

\bibitem[KD09]{Kok:09:LHL}
Stanley Kok and Pedro Domingos.
\newblock Learning markov logic network structure via hypergraph lifting.
\newblock In {\em Proceedings of the 26th Annual International Conference on
  Machine Learning}, Montreal, Quebec, Canada, 2009. ACM.

\bibitem[KD10]{Kok:10:LSM}
Stanley Kok and Pedro Domingos.
\newblock Learning markov logic networks using structural motifs.
\newblock In {\em Proceedings of the 27th International Conference on Machine
  Learning}, pages 551--558, Haifa, Israel, 2010. Omnipress.

\bibitem[KNKS11]{Khot:11:MLN-Boost}
Tushar Khot, Sriraam Natarajan, Kristian Kersting, and Jude~W. Shavlik.
\newblock Learning markov logic networks via functional gradient boosting.
\newblock In {\em 11th IEEE International Conference on Data Mining}, pages
  320--329, Vancouver, BC, Canada, 2011. IEEE.

\bibitem[LD94]{Lavrac:94:ilpbook}
Nada Lavrac and Saso Dzeroski.
\newblock {\em Inductive Logic Programming: Techniques and Applications}.
\newblock Ellis Horwood, New York, 1994.

\bibitem[MM07]{mihalkova:07:BUSL}
Lilyana Mihalkova and Raymond~J. Mooney.
\newblock Bottom-up learning of markov logic network structure.
\newblock In {\em Proceedings of 24th International Conference on Machine
  Learning}, Corvallis, OR, 2007.

\bibitem[Mug91]{Muggleton:91:ilp}
Stephen Muggleton.
\newblock Inductive logic programming.
\newblock {\em New Generation Computing}, 8(4):295--318, 1991.

\bibitem[NJFH03]{Neville:03:RelProbTree}
Jennifer Neville, David Jensen, Lisa Friedland, and Michael Hay.
\newblock Learning relational probability trees.
\newblock In {\em Proceedings of the 9th ACM SIGKDD International Conference on
  Knowledge Discovery and Data Mining}, pages 625--630, Washington, DC, 2003.
  ACM.

\bibitem[NKK{\etalchar{+}}12]{Natarajan:12:RDN-Boost}
Sriraam Natarajan, Tushar Khot, Kristian Kersting, Bernd Gutmann, and Jude~W.
  Shavlik.
\newblock Gradient-based boosting for statistical relational learning: The
  relational dependency network case.
\newblock {\em Machine Learning}, 86(1):25--56, 2012.

\bibitem[PD07]{Poon:07:MLNCora}
Hoifung Poon and Pedro Domingos.
\newblock Joint inference in information extraction.
\newblock In {\em Proceedings of the 22nd AAAI Conference on Artificial
  Intelligence}, pages 913--918, Vancouver, British Columbia, Canada, 2007.
  AAAI Press.

\bibitem[Qui93]{Quinlan:93:C45}
J.~Ross Quinlan.
\newblock {\em C4.5: Programs for Machine Learning}.
\newblock Morgan Kaufmann, San Francisco, CA, 1993.

\bibitem[Qui96]{Quinlan:96:boostfoil}
J.~Ross Quinlan.
\newblock Boosting first-order learning.
\newblock In {\em Proceedings of 7th International Workshop on Algorithmic
  Learning Theory}, pages 143--155, Sydney, Australia, 1996. Springer Berlin
  Heidelberg.

\bibitem[RD06]{Richardson:06:mln}
Matthew Richardson and Pedro Domingos.
\newblock Markov logic networks.
\newblock {\em Machine Learning}, 62(1-2):107--136, 2006.

\bibitem[RM92]{Richards:92:pathfind}
Bradley~L. Richards and Raymond~J. Mooney.
\newblock Learning relations by pathfinding.
\newblock In {\em Proceedings of the 10th National Conference on Artificial
  Intelligence}, pages 50--55, San Jose, CA, 1992. AAAI Press.

\bibitem[SS99]{Schapire:99:ImpBoostConf}
Robert~E. Schapire and Yoram Singer.
\newblock Improved boosting algorithms using confidence-rated predictions.
\newblock {\em Machine Learning}, 37(3):297--336, 1999.

\end{thebibliography}
\end{document}